\documentclass{article}
\usepackage{kr}

\usepackage{times}
\usepackage{soul}
\usepackage{url}
\usepackage[hidelinks]{hyperref}
\usepackage{doi}
\usepackage{xurl}
\usepackage[utf8]{inputenc}
\usepackage[small]{caption}
\usepackage{graphicx}
\usepackage{amsmath}
\usepackage{amsthm}
\usepackage{booktabs}
\usepackage{algorithm}
\usepackage{algorithmic}
\usepackage{amssymb}
\usepackage{enumitem}
\usepackage[T1]{fontenc}

\newtheorem{theorem}{Theorem}
\newtheorem{definition}{Definition}
\newtheorem{proposition}{Proposition}
\newtheorem{observation}{Observation}

\newcommand{\AF}{\mathsf{AF}}

\newcommand{\CDAF}{\mathsf{CDAF}}
\newcommand{\Args}{\mathcal{A}}
\newcommand{\attacks}{\rightarrow}
\newcommand{\Contexts}{\mathcal{C}}

\newcommand{\pref}{\mathsf{pref}}
\newcommand{\grounded}{\mathsf{gr}}
\newcommand{\stable}{\mathsf{stb}}
\newcommand{\complete}{\mathsf{comp}}
\newcommand{\Persps}{\Pi}

\title{Choosing the Lens:\\Strategic Perspective Activation in Context-Dependent Argumentation}

\author{
Albert Sadowski$^1$\and
Jarosław A. Chudziak$^1$\\
\affiliations
$^1$Warsaw University of Technology\\
\emails
\{albert.sadowski.stud, jaroslaw.chudziak\}@pw.edu.pl
}

\begin{document}

\maketitle

\begin{abstract}
The same arguments often need to be evaluated under different external regimes. An agent with influence over the regime has a strategic lever that standard formalisms do not directly capture. We introduce context-dependent argumentation frameworks (CDAFs), an extension of Dung's theory in which a defeat function determines, per context, which attacks succeed. Blocked attacks are inverted rather than deleted, so extensions stay conflict-free with respect to the attack relation. A perspective-labeled specialisation derives the defeat function from a relevance set $\rho$ and a priority $\pi$. The relevance set is the agent's action space. In a small worked example, the agent's target argument is rejected under every full-relevance priority, yet accepted under a partial activation whose outcome no VAF audience can mirror. We define the corresponding decision problem, ACTIVATION-MANIPULATION, and record baseline complexity bounds. For grounded semantics with mandatory perspectives the problem is NP-complete, and the hardness comes from the activation choice itself.
\end{abstract}

\section{Introduction}
\label{sec:intro}

Consider a team proposing an architectural change to a production service. The proposal accumulates arguments about latency, operational complexity, deployment cost, and regression risk. These claims are not in dispute as facts; what is in dispute is how they bear on the decision. To a performance reviewer, latency dominates and complexity is secondary. A reliability reviewer reads the same arguments through incident risk, and is unmoved by latency improvements without stability evidence. The finance reviewer cares about deployment and headcount cost. The arguments and their conflicts are fixed across reviewers. Which attacks succeed depends in part on which lens governs the review.

Suppose the sponsor has some influence over which lens applies. They may choose which review tracks to pursue, who joins early design discussions, and which forums hear the proposal first. They cannot change how each reviewer ranks considerations within a lens; the reliability reviewer always prioritises stability. But some set of lenses ends up governing the review. We call this the regime. Whoever decides which lenses are foregrounded has a strategic lever.

Standard argumentation~\cite{dung1995} has no parameter for the regime. One can evaluate a separate Dung framework for each candidate regime, but then the choice between regimes lives outside the theory. Value-based argumentation~\cite{benchCapon2003} comes closer: each audience induces its own defeat pattern, and acceptance under some or every audience is a standard question there. But an audience is an ordering of the value set. It can rerank values, not deactivate them; in particular, an attack between two arguments that share a value succeeds for every audience. The regime we study varies the roster of active perspectives, not their ranking, and this lever is not expressible as an audience.

We propose an extension of Dung's framework, context-dependent argumentation frameworks (CDAFs), and use it to study activation manipulation: can the agent choose the regime so that their target argument is accepted? We work through an example in which the target is reachable only via partial-relevance activation. We formalise it as the decision problem ACTIVATION-MANIPULATION and record baseline complexity bounds. For grounded semantics with mandatory perspectives we prove NP-completeness; this bound is due to the activation choice, not to the base semantics. A fuller study of the framework is beyond this paper's scope; here we focus on the strategic angle.

\section{Context-Dependent Argumentation Frameworks}
\label{sec:cdaf}

\begin{definition}[Context-dependent argumentation framework]
\label{def:cdaf}
A \emph{context-dependent argumentation framework} is a tuple $\CDAF = \langle \Args, R, \Contexts, \delta \rangle$ where $\Args$ is a finite set of arguments, $R \subseteq \Args \times \Args$ is the attack relation, $\Contexts$ is a finite set of contexts, and $\delta \colon \Contexts \times R \to \{0,1\}$ is the \emph{defeat function}. We say that an attack $(a,b) \in R$ \emph{succeeds} in $c$ if $\delta(c,(a,b)) = 1$, and that it is \emph{blocked} otherwise. For each context $c \in \Contexts$, the \emph{induced framework} is $\AF_c = \langle \Args, D_c \rangle$ with
\begin{equation*}
\begin{split}
D_c = {} & \{(a,b) \in R : \delta(c,(a,b)) = 1\} \\
& \cup \{(b,a) : (a,b) \in R,\ \delta(c,(a,b)) = 0\}.
\end{split}
\end{equation*}
\end{definition}

A successful attack is kept as a defeat. A blocked attack is inverted: the conflict is resolved in favour of the attacked argument. Since each $\AF_c$ is a standard Dung framework, the classical semantics carry over per context: a $\sigma$-extension in $c$ is just a $\sigma$-extension of $\AF_c$~\cite{baroni2018}, for any of $\sigma \in \{\grounded, \pref, \stable, \complete\}$. We write $\sigma(c)$ for the set of $\sigma$-extensions of $\AF_c$.

\begin{proposition}
\label{prop:conflictfree}
For every context $c$, every set that is conflict-free in $\AF_c$ is conflict-free with respect to $R$. In particular, every $\sigma$-extension in $c$ is conflict-free with respect to $R$.
\end{proposition}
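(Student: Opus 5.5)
The plan is to argue by contraposition: take a set $S \subseteq \Args$ that is not conflict-free with respect to $R$, and show that it is not conflict-free in $\AF_c$ either. The key observation is that the construction of $D_c$ in Definition~\ref{def:cdaf} never discards an $R$-conflict. Each attack in $R$ survives in $D_c$, either in its original orientation or reversed.

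Concretely, suppose $a, b \in S$ with $(a,b) \in R$. We split on the value of $\delta(c,(a,b))$.
\begin{itemize}
\item If $\delta(c,(a,b)) = 1$, then $(a,b) \in D_c$ by the first component of the union.
\item If $\delta(c,(a,b)) = 0$, then $(b,a) \in D_c$ by the second component.
\end{itemize}
In either case $D_c$ contains a pair with both endpoints in $S$, so $S$ is not conflict-free in $\AF_c$. The degenerate case $a = b$, a self-attack, is covered: then $(a,a) \in D_c$ regardless of $\delta$. This gives the first claim.

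For the ``in particular'' part, recall that $\sigma(c)$ is by definition the set of $\sigma$-extensions of the Dung framework $\AF_c$. For each $\sigma \in \{\grounded, \pref, \stable, \complete\}$, the standard semantics~\cite{baroni2018} only admit sets that are conflict-free in the underlying framework. Applying the first claim to such an extension finishes the proof.

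There is no real obstacle here. The only point needing care is to state precisely that conflict-freeness in $\AF_c$ means there are no $x, y \in S$ with $(x,y) \in D_c$, so that the reversed pair $(b,a)$ witnesses a conflict just as well as $(a,b)$ does.
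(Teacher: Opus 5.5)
Your proof is correct and follows the paper's argument: each attack $(a,b)\in R$ reappears in $D_c$, as $(a,b)$ when it succeeds and as $(b,a)$ when it is blocked, so no conflict-free set of $\AF_c$ contains both endpoints. Your explicit treatment of self-attacks, and of the ``in particular'' clause via the conflict-freeness of all standard Dung extensions, spells out details the paper leaves implicit.
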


\textit{Proof.} Let $(x, y) \in R$. If the attack succeeds in $c$, then $(x, y) \in D_c$; if it is blocked, then $(y, x) \in D_c$. Either way, no conflict-free set of $\AF_c$ contains both $x$ and $y$. $\qed$

\paragraph{Blocked attacks.}
The simpler alternative is to delete blocked attacks, evaluating $\langle \Args, \{(a,b) \in R : \delta(c,(a,b)) = 1\} \rangle$. This is what a VAF audience does: defeats form a subset of the attacks~\cite{benchCapon2003}. Deletion has a known defect. Extensions can contain both ends of an attack in $R$. This violates conflict-freeness with respect to $R$; at the structured level it yields inconsistent conclusions, against the consistency postulates~\cite{caminadaAmgoud2007}. In the example below, deletion would accept $\{a, d, t\}$ while $(a,t) \in R$. Preference-based argumentation faced the same defect, and inverting the blocked attack is one of its standard repairs~\cite{amgoud2014}. ASPIC$^+$ instead redefines conflict-freeness over attacks rather than defeats, so a blocked attack still bars co-acceptance~\cite{PrakkenModgil2013}. Extended argumentation frameworks defend such co-acceptance outright when the attack is asymmetric~\cite{modgil2009}. We adopt inversion. With deletion, Proposition~\ref{prop:conflictfree} does not hold. The intended reading: an attack the context blocks is an objection that is not voiced, and an unvoiced objection is resolved against the silenced party.

\paragraph{Perspective-labeled CDAFs.}
In the specialisation below, the defeat function is derived rather than given. Every argument carries a source perspective. A context activates a subset of perspectives; a priority ranks the perspectives. Defeat then follows from these assignments.

\begin{definition}[Perspective-labeled CDAF]
\label{def:plcdaf}
A \emph{perspective-labeled CDAF} is a tuple $\langle \Args, R, \Contexts, \Persps, \mathit{src}, \rho, \pi \rangle$ with $\Persps$ a finite set of perspectives, $\mathit{src} \colon \Args \to \Persps$ assigning each argument to its source, $\rho \colon \Contexts \to 2^{\Persps}$ giving the active perspectives in each context, and $\pi \colon \Contexts \times \Persps \to \mathbb{N}$ a priority function. The defeat function is
\begin{equation*}
\begin{split}
\delta_\pi(c, (a,b)) = 1 \;\;\text{iff}\;\; & \mathit{src}(a) \in \rho(c) \\
& \wedge \pi(c, \mathit{src}(a)) \geq \pi(c, \mathit{src}(b)).
\end{split}
\end{equation*}
\end{definition}

When $|\Contexts| = 1$ we drop the context argument and write $\rho \subseteq \Persps$, $\pi \colon \Persps \to \mathbb{N}$, and $D$ for the induced defeat relation. The example and the decision problem below use this form.

\paragraph{Comparison to VAFs and the action space.}
Perspectives are loosely analogous to values, and $(\rho, \pi)$ to a VAF audience. The differences are that $\pi$ may rerank perspectives across contexts, and that $\rho$ may deactivate them outright. A further difference is the handling of blocked attacks: a VAF audience deletes them, while we invert them. Under a deletion reading, a full-relevance context with an injective priority would be exactly an audience; inversion breaks the correspondence. The deactivation move is what drives the strategic story of this paper. We read $\rho$ as the agent's action space: the lens through which they choose to have arguments evaluated. The priority $\pi$ is not the agent's to change: it is part of the framework, fixed before the agent moves.

\paragraph{Design choices.}
The defeat condition tests the perspective of the attacker, while the priority of the attacked argument acts as a shield. Deactivating a perspective therefore costs its arguments their offensive force, not their standing. Two nearby designs fail for our purpose. If activation gated both endpoints of an attack, every attack on an argument would require that argument's perspective to be active, so an agent could shield any argument by muting its own perspective. That move would trivialise manipulation. If deactivation instead removed the arguments of inactive perspectives, it would change what is on the record rather than which objections carry force. That is a different phenomenon, and it has its own formalism: selecting which designated arguments are present is the control move of control argumentation frameworks~\cite{Dimopoulos2018}. It would also break the example below, where the winning move deactivates the perspective of the target itself; removal would delete the target rather than help it. Arguments of an inactive perspective must stay evaluable. The handling of blocked attacks is the remaining design choice, discussed above.

\section{A Worked Example}
\label{sec:example}

\paragraph{Setup.}
Let $\Args = \{a, b, t, d\}$, $\Persps = \{\alpha, \beta, \gamma\}$, $\mathit{src}(a) = \mathit{src}(t) = \alpha$, $\mathit{src}(b) = \beta$, $\mathit{src}(d) = \gamma$, and $R = \{(a,t), (b,t), (a,b), (b,a), (d,b)\}$. Table~\ref{tab:setup} summarises the structure.

\begin{table}[h]
\centering
\small
\begin{tabular}{cccc}
\toprule
Argument & Perspective & Attacked by & Attacks \\
\midrule
$a$ & $\alpha$ & $b$ & $t, b$ \\
$b$ & $\beta$ & $a, d$ & $t, a$ \\
$t$ & $\alpha$ & $a, b$ & --- \\
$d$ & $\gamma$ & --- & $b$ \\
\bottomrule
\end{tabular}
\caption{Argument structure of the worked example.}
\label{tab:setup}
\end{table}

The point of the construction is that $a$ and $t$ share a perspective, so the attack $(a, t)$ is intra-perspective. The agent's goal is to have $t$ accepted under the preferred semantics.

\paragraph{Result 1: $t$ is rejected under full relevance.}
For every priority $\pi \colon \Persps \to \mathbb{N}$, $t$ is not credulously preferred-accepted in the framework induced by $\rho = \Persps$.

\textit{Proof.} With $\rho = \Persps$, the attack $(a, t)$ succeeds under every $\pi$, since $\pi(\alpha) \geq \pi(\alpha)$ holds trivially. The only defeat that can target $a$ is $(b, a)$: no other attack in $R$ points at $a$, and no blocked attack inverts into one, because $(a, t)$ never fails and the inversion of $(a, b)$ is $(b, a)$ itself. Moreover, $(b, a) \in D$ iff $\pi(\beta) \geq \pi(\alpha)$, either as a success or as the inversion of a failed $(a, b)$. Now suppose $E$ is admissible and contains $t$. Then $E$ must defend $t$ against $(a, t)$. If $\pi(\beta) < \pi(\alpha)$, then $a$ has no defeater and the defense is impossible. If $\pi(\beta) \geq \pi(\alpha)$, the defense forces $b \in E$; but then $(b, t)$ succeeds as well, since $\mathit{src}(t) = \mathit{src}(a) = \alpha$, and $\{b, t\} \subseteq E$ violates conflict-freeness. Either way, no admissible set contains $t$. Injectivity is not needed: only the comparison between $\pi(\beta)$ and $\pi(\alpha)$ matters, ties included. The priority of $\gamma$ is irrelevant, since $(d, b)$ and its inversion touch only $b$ and $d$. $\qed$

\paragraph{Result 2: $t$ is accepted under $\rho = \{\beta, \gamma\}$.}
With $\pi(\alpha) = 1$, $\pi(\beta) = 2$, $\pi(\gamma) = 3$, and $\rho = \{\beta, \gamma\}$, the induced framework has the unique preferred extension $\{d, t\}$, so $t$ is credulously, in fact skeptically, preferred-accepted.

\textit{Proof.} Computing $\delta_\pi$ under $\rho = \{\beta, \gamma\}$ gives Table~\ref{tab:active}. The attacks $(b, t)$, $(b, a)$, and $(d, b)$ succeed. The attacks $(a, t)$ and $(a, b)$ fail, since $\alpha \notin \rho$, and contribute the inverted defeats $(t, a)$ and $(b, a)$. So $D = \{(b, t), (b, a), (d, b), (t, a)\}$.

\begin{table}[h]
\centering
\small
\begin{tabular}{ccccc}
\toprule
Attack & $\mathit{src} \in \rho$? & Priority & Succeeds? & Contributes \\
\midrule
$(a, t)$ & $\alpha \notin \rho$ & --- & no & $(t, a)$ \\
$(b, t)$ & $\beta \in \rho$ & $2 \geq 1$ & yes & $(b, t)$ \\
$(a, b)$ & $\alpha \notin \rho$ & --- & no & $(b, a)$ \\
$(b, a)$ & $\beta \in \rho$ & $2 \geq 1$ & yes & $(b, a)$ \\
$(d, b)$ & $\gamma \in \rho$ & $3 \geq 2$ & yes & $(d, b)$ \\
\bottomrule
\end{tabular}
\caption{Defeats induced under $\rho = \{\beta, \gamma\}$. A failed attack contributes its inversion.}
\label{tab:active}
\end{table}

The argument $d$ has no defeater, so it enters the grounded extension and defeats $b$. With $b$ defeated, the only defeater of $t$ is gone, so $t$ enters as well, and $t$ defeats $a$ through the inverted $(t, a)$. The grounded extension is $\{d, t\}$. Every preferred extension is complete and therefore contains it, and neither $a$ nor $b$ can be added, since $(t, a)$ and $(d, b)$ are defeats. So $\{d, t\}$ is the unique preferred extension, $t$ is accepted, and $a$ is rejected. $\qed$

Under any full-relevance evaluation, $t$ is caught in a structural trap. Its same-perspective neighbour $a$ attacks it, and the only argument that can defend $t$ against $a$ is $b$, which itself attacks $t$. Whenever $b$ is strong enough to defeat $a$, it is also strong enough to defeat $t$. The defender doubles as an attacker, and the trap holds for every priority. The agent's way out is to deactivate $\alpha$. Setting $\rho = \{\beta, \gamma\}$ blocks every attack from an $\alpha$-perspective argument, including the friendly-fire attack $(a, t)$. The move has a cost: the blocked attacks are resolved against $a$, so $a$ itself is rejected. The record does not show $a$ standing beside $t$; it shows $a$ overruled. The trap is broken: $d$, unaffected by the move since $\gamma \in \rho$, defeats $b$ and clears the only remaining threat to $t$. Together, Results 1 and 2 give the general point: rejection under every full-relevance priority can coexist with acceptance under a partial activation.

The example also separates activation from VAF audiences at the level of outcomes. Under the winning move $\rho = \{\beta, \gamma\}$, the unique preferred extension is $\{d, t\}$: the outcome accepts $t$ and rejects both $a$ and $b$. No VAF audience can produce this acceptance pattern, under any assignment of values.

\begin{observation}
\label{obs:non-vaf}
Let $\langle \Args, R \rangle$ be as in the setup. For every value assignment $\mathit{val} \colon \Args \to V$ and every audience, that is, every strict partial order on $V$, the resulting VAF credulously accepts $a$ or $b$ under preferred semantics. So no VAF over $\langle \Args, R \rangle$ reproduces the acceptance pattern of $\rho = \{\beta, \gamma\}$, which rejects both.
\end{observation}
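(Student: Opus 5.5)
The plan is a direct case analysis on the VAF defeat relation. For each case I will exhibit an admissible set containing $a$ or $b$. Every admissible set is contained in some preferred extension, so this gives credulous preferred acceptance of $a$ or $b$.

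First I fix the VAF semantics~\cite{benchCapon2003}. Given $\mathit{val}$ and an audience $\succ$ (a strict partial order on $V$), an attack $(x,y) \in R$ is a defeat iff $\mathit{val}(y) \not\succ \mathit{val}(x)$. Conflict-freeness and admissibility are evaluated on the defeat graph, which is a subgraph of $\langle \Args, R\rangle$. The only facts about $R$ I need are local:
\begin{itemize}
\item the only attacker of $a$ is $b$;
\item the attackers of $b$ are $a$ and $d$;
\item $d$ is unattacked;
\item there are no $R$-edges between $a$ and $d$, and no self-attacks.
\end{itemize}

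The cases are as follows.
\begin{itemize}
\item \emph{Case 1: $(b,a)$ is not a defeat.} Then $a$ has no defeater at all, so $\{a\}$ is admissible.
\item \emph{Case 2a: $(b,a)$ is a defeat and $(d,b)$ is a defeat.} Take $\{a,d\}$. It is conflict-free, because there is no $R$-edge between $a$ and $d$. Its only possible defeater is $b$, which can defeat $a$ but not $d$, since $d$ is unattacked. The set defends itself against $b$ via $(d,b)$. So $\{a,d\}$ is admissible.
\item \emph{Case 2b: $(b,a)$ is a defeat and $(d,b)$ is not.} The only possible defeater of $b$ is $a$, and $b$ defeats $a$ by assumption. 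So $\{b\}$ is admissible.
\end{itemize}

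The cases are exhaustive, so in every VAF over $\langle \Args, R\rangle$ some admissible set contains $a$ or $b$. This proves the first sentence of Observation~\ref{obs:non-vaf}.

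The second sentence then follows from Result~2. Under $\rho = \{\beta,\gamma\}$ the unique preferred extension is $\{d,t\}$, so neither $a$ nor $b$ is credulously accepted. No VAF can match that pattern.

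I expect no real obstacle. The only point needing care is to state precisely the VAF defeat condition and the fact that conflict-freeness is taken with respect to defeats rather than attacks. That fact is what makes $\{a,d\}$ and $\{b\}$ legitimate without further checks. Asymmetry of $\succ$ is not needed, so the argument also covers equal or incomparable values.
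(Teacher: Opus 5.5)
Your proof is correct and follows the paper's approach. Both proofs split on which attacks in $R$ survive as VAF defeats, exhibit one of the admissible witnesses $\{a\}$, $\{b\}$ or $\{a,d\}$ in each case, and then appeal to Result~2. The only difference is that you split on $(b,a)$ and then $(d,b)$, giving three cases rather than the paper's four; this works because whether $(a,b)$ defeats never matters for your witnesses (where the paper uses $\{a\}$ because both $(a,b)$ and $(b,a)$ defeat, you use $\{b\}$ or $\{a,d\}$).
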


\textit{Proof.} In a VAF, an attack $(x, y) \in R$ defeats unless $\mathit{val}(y) \succ \mathit{val}(x)$, so defeats form a subset of $R$~\cite{benchCapon2003}. Nothing in $R$ attacks $d$, and the only attack on $a$ in $R$ is $(b, a)$. Four cases cover all audiences. If $(b, a)$ does not defeat, then $a$ is unattacked and $\{a\}$ is admissible. If $(b, a)$ and $(a, b)$ both defeat, which happens exactly when neither value is preferred to the other, then $\{a\}$ is admissible, since $a$ counter-attacks its only attacker. If $(b, a)$ defeats and neither $(a, b)$ nor $(d, b)$ does, then $b$ is unattacked and $\{b\}$ is admissible. If $(b, a)$ and $(d, b)$ defeat and $(a, b)$ does not, then $\{a, d\}$ is admissible: $d$ is unattacked, $d$ defends $a$ against $b$, and $R$ has no attack between $a$ and $d$. In each case $a$ or $b$ belongs to some admissible set, hence to some preferred extension. Under $\rho = \{\beta, \gamma\}$, the unique preferred extension is $\{d, t\}$, so neither $a$ nor $b$ is accepted. $\qed$

The driving fact is that VAF defeats never leave $R$. Rejecting $a$ therefore requires an accepted attacker of $a$ inside $R$, and the only candidate is $b$; the four cases show that this constraint always saves $a$ or $b$. Inversion escapes the constraint because a blocked attack yields a defeat outside $R$. Here the blocked attack $(a, t)$ turns into a defeat by $t$, an argument that attacks nothing in $R$ and so defeats nothing in any VAF. The separation at the level of outcomes is specific to the inversion reading. Under deletion, the winning outcome would be $\{a, d, t\}$, and an audience can reach exactly that outcome by giving the four arguments distinct values ordered $v_t \succ v_d \succ v_a \succ v_b$. Under inversion, the outcome $\{d, t\}$ is out of reach for every audience.

\section{Activation Manipulation}
\label{sec:manipulation}

The strategic question of the previous section generalises to a decision problem. Given a perspective-labeled framework, a fixed priority, and a target argument, can the agent choose $\rho$ so that the target is accepted? We add a set of mandatory perspectives to the input. Mandatory perspectives model review tracks that the institution always convenes. They also keep the problem from degenerating: without them, $\rho = \emptyset$ is available, which under deletion would discard every attack and accept every argument outright, and under inversion would reverse every attack. A blocked attack becomes a defeat against its own source, so muting a perspective exposes its arguments to their own inverted attacks. A target that sources any attack cannot be isolated by silencing everything.

\medskip
\noindent\textsc{Activation-Manipulation}$_\sigma$.\\
\textit{Input.} A finite tuple $\langle \Args, R, \Persps, \mathit{src} \rangle$, a priority $\pi \colon \Persps \to \mathbb{N}$, a mandatory set $M \subseteq \Persps$, and a target argument $t \in \Args$.\\
\textit{Question.} Does there exist $\rho$ with $M \subseteq \rho \subseteq \Persps$ such that $t$ is credulously $\sigma$-accepted in the framework induced by $(\rho, \pi)$?
\medskip

For the upper bound we use a standard guess-and-check. Nondeterministically choose $\rho$ with $M \subseteq \rho \subseteq \Persps$, build the induced defeat relation in polynomial time, then verify $\sigma$-acceptance in the induced framework. For grounded semantics the verification is polynomial. For stable semantics, a stable extension containing $t$ can be guessed alongside $\rho$. The preferred case looks harder at first, since verifying that a set is a preferred extension requires checking maximality, which is co-NP. The escape is a standard fact: credulous preferred-acceptance coincides with credulous admissibility, because every admissible set extends to some preferred extension. So an admissible witness containing $t$ can be guessed alongside $\rho$. This places \textsc{Activation-Manipulation}$_\sigma$ in NP for $\sigma \in \{\grounded, \stable, \pref\}$.

For lower bounds, first set $\Persps = M = \{p_0\}$ with all arguments mapped to the single perspective and $\pi$ constant. The only allowed activation is $\rho = \{p_0\}$, every attack succeeds, and the induced framework is $\langle \Args, R \rangle$ itself, under both the inversion and the deletion reading. The problem restricted to these instances is exactly credulous $\sigma$-acceptance in standard Dung frameworks. This yields NP-hardness for $\sigma \in \{\stable, \pref\}$~\cite{baroni2018,dung1995}, so \textsc{Activation-Manipulation}$_\sigma$ is NP-complete in those cases. The activation choice is forced here, so these bounds are inherited from the base semantics and say nothing about the activation lever. Credulous grounded acceptance is in P, so the same restriction gives only P-hardness for grounded. The next result shows that the activation choice alone carries NP-hardness for grounded semantics.

\begin{theorem}
\label{thm:grounded}
\textsc{Activation-Manipulation}$_{\grounded}$ is NP-complete. Hardness holds already for a constant priority, and under both the inversion and the deletion reading of blocked attacks.
\end{theorem}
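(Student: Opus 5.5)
The plan is to obtain membership in NP from the guess-and-check argument given just before the statement: guess $\rho$, build the induced defeats, and compute the grounded extension in polynomial time. For hardness I will reduce from CNF-SAT. The reduction uses a constant priority $\pi \equiv 0$. Under this priority, an attack $(x,y)\in R$ succeeds exactly when $\mathit{src}(x)\in\rho$. The activation of one perspective per variable will encode a truth assignment.

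\textbf{Construction.} Take a formula with variables $x_1,\dots,x_n$ and clauses $C_1,\dots,C_m$. Introduce one mandatory perspective $\mu$, and set $M=\{\mu\}$. Also introduce one perspective $p_i$ for each variable; $p_i$ is active iff $x_i$ is true. The arguments and their sources are as follows.
\begin{itemize}
\item A target $t$ and clause arguments $c_1,\dots,c_m$, all with source $\mu$, with attacks $(c_j,t)$ for every $j$.
\item For each $i$, arguments $\mathit{pos}_i$ and $s_i$ with source $p_i$, and an argument $\mathit{neg}_i$ with source $\mu$.
\end{itemize}
The remaining attacks are:
\begin{itemize}
\item $(\mathit{pos}_i,c_j)$ whenever $x_i\in C_j$;
\item $(\mathit{neg}_i,c_j)$ whenever $\neg x_i\in C_j$;
\item $(s_i,\mathit{neg}_i)$.
\end{itemize}
The size of the instance is clearly polynomial in the formula.

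\textbf{Correctness.} All attacks sourced in $\mu$ always succeed. The only attacks that depend on $\rho$ are $(\mathit{pos}_i,c_j)$ and $(s_i,\mathit{neg}_i)$. I will verify the following facts under both readings of blocked attacks.
\begin{itemize}
\item If $p_i\in\rho$, then $\mathit{pos}_i$ and $s_i$ are unattacked. Hence both are in the grounded extension, $\mathit{neg}_i$ is out, and every clause containing $x_i$ is out.
\item If $p_i\notin\rho$, the attacks from $\mathit{pos}_i$ and $s_i$ vanish under deletion, and under inversion they become $(c_j,\mathit{pos}_i)$ and $(\mathit{neg}_i,s_i)$. In both cases $\mathit{neg}_i$ is unattacked, so it is in the grounded extension and defeats every clause containing $\neg x_i$. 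The inverted defeats only point into $\mathit{pos}_i$ and $s_i$, which play no further role.
\item The target $t$ attacks nothing, so no inversion ever gives it offensive force or creates a new attacker of it. Its only defeaters are the $c_j$.
\end{itemize}
The grounded extension is computed in two layers, literal arguments first and then clauses. It follows that $t$ is grounded-accepted iff every $c_j$ is defeated by an accepted literal argument, that is, iff the assignment $x_i=\text{true}\Leftrightarrow p_i\in\rho$ satisfies every clause. Conversely, any satisfying assignment gives a valid $\rho\supseteq M$.

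\textbf{Main obstacle.} The main risk is the inversion reading. Inactive perspectives produce reversed defeats, and I must make sure these neither resurrect a clause nor attack $t$ or an accepted literal argument. The plan handles this by giving every $\rho$-dependent argument ($\mathit{pos}_i$, $s_i$) only outgoing attacks. Their inversions therefore only create defeats against themselves. In the write-up I will check this point carefully, including the case where $\mathit{pos}_i$ is defeated by a clause via inversion while that clause is still defeated by some other literal argument.
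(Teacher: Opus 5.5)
Your reduction is correct under both readings of blocked attacks. It shares the paper's outer architecture: clause arguments and the target on the mandatory perspective, literal arguments attacking clauses, and a constant priority. Your variable gadget, however, is genuinely different.

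The paper gives each literal $l$ its own perspective $p_l$ and lets complementary literal arguments attack each other. Since $\rho$ may then activate both or neither of $p_{v_i}$ and $p_{\lnot v_i}$, its converse direction cannot read an assignment off $\rho$. It instead extracts $T=\{l : x_l\in G,\ p_l\in\rho\}$ and proves $T$ consistent by showing that $x_l\in G$ with $p_l\in\rho$ forces $p_{\lnot l}\notin\rho$. Under deletion it must also set aside literal arguments that lie in $G$ only because both perspectives of their variable are off.

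You instead use one optional perspective per variable, place the negative literal on $\mu$ so that it is live by default, and let the $p_i$-sourced switch $s_i$ silence it. The allowed activations $\rho\supseteq M$ then correspond bijectively to truth assignments, and you can write down the grounded extension for every $\rho$:
\begin{itemize}
\item for each variable, the true literal's argument is in $G$ with its attacks on clauses in force, while the false literal's argument is either outside $G$ or stripped of its attacks;
\item a clause with a true literal is defeated from inside $G$;
\item a clause with no true literal is defended by $G$, since its only defeaters are arguments $\mathit{neg}_i$ with $p_i\in\rho$, each defeated by $s_i\in G$. So it enters $G$ and blocks $t$.
\end{itemize}
No consistency extraction is needed. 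The two readings differ only in defeats pointing into the inert arguments $\mathit{pos}_i, s_i$ with $p_i\notin\rho$. As a bonus, your reduction is parsimonious: successful activations match satisfying assignments one-to-one. In exchange, the paper's construction is symmetric in the two literals of a variable and needs no auxiliary switch arguments.
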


\textit{Proof.} Membership was shown above. For hardness we reduce from 3SAT. Let $\psi$ have variables $v_1, \dots, v_n$ and clauses $c_1, \dots, c_m$. Build an argument $x_l$ for every literal $l$, an argument $y_j$ for every clause $c_j$, and a target $\varphi$. The attacks are $x_l \attacks x_{\lnot l}$ for every literal, $x_l \attacks y_j$ whenever $l$ occurs in $c_j$, and $y_j \attacks \varphi$ for every $j$. Every literal argument $x_l$ gets its own perspective $p_l$, while the clause arguments and $\varphi$ share one perspective $p_C$. The priority $\pi$ is constant, so an attack succeeds iff the perspective of its source is in $\rho$. Set $M = \{p_C\}$ and the target to $\varphi$.

Suppose an assignment satisfies $\psi$. Take $\rho = \{p_C\} \cup \{p_l : l \text{ is true}\}$, so for each variable exactly one literal perspective is active. Each true-literal argument $x_l$ is then unattacked: the counter-attack $(x_{\lnot l}, x_l)$ fails, and no failed attack inverts into $x_l$, because none of the attacks sourced at $x_l$ fails. So the true literals enter the grounded extension. Every clause contains a true literal, whose attack on $y_j$ succeeds, so every $y_j$ is defeated. The attacks $(y_j, \varphi)$ all succeed, since $p_C \in \rho$, so $\varphi$ is defended and enters the grounded extension.

Conversely, suppose $\varphi$ is in the grounded extension $G$ for some $\rho \supseteq \{p_C\}$. All attacks $(y_j, \varphi)$ succeed, so $G$ defeats every $y_j$. The only defeats on $y_j$ come from literal arguments $x_l$ with $l \in c_j$ and $p_l \in \rho$. Under deletion this is immediate. Under inversion, nothing else can point at $y_j$, since the only attack sourced at $y_j$ is $(y_j, \varphi)$ and it never fails. So for every clause $c_j$ there is some $x_l \in G$ with $l \in c_j$ and $p_l \in \rho$. For such an $x_l$, the complementary perspective $p_{\lnot l}$ cannot be in $\rho$. Otherwise $(x_{\lnot l}, x_l)$ succeeds, and the only defeat on $x_{\lnot l}$ is $(x_l, x_{\lnot l})$ itself, since all attacks sourced at $x_{\lnot l}$ succeed and nothing inverts into it; the grounded fixpoint could then never defend $x_l$, so $x_l \notin G$. Now let $T = \{l : x_l \in G \text{ and } p_l \in \rho\}$. By the previous step, $T$ contains no complementary pair, and every clause contains a literal of $T$. Any assignment that makes $T$ true satisfies $\psi$. The same computation goes through under deletion. There the extraction must use $T$ rather than membership in $G$ alone: if both perspectives of a variable are inactive, deletion leaves both of its literal arguments unattacked and in $G$, but their attacks on clause arguments are gone, so they contribute nothing. $\qed$

Since credulous grounded acceptance in a fixed framework is polynomial, this hardness comes from the activation choice itself, not from the base semantics.

\section{Discussion}
\label{sec:discussion}

The strategic action we study here, varying which attacks constitute defeats while keeping the argument structure fixed, has three close neighbours. Control argumentation frameworks~\cite{Dimopoulos2018} give the agent a set of designated control arguments and ask whether including some subset makes a target accepted in every completion of an uncertain framework. The control move edits which arguments are present; our regime keeps the arguments and attacks fixed and varies only which attacks count as defeats. Incomplete argumentation frameworks~\cite{Baumeister2021} are the second neighbour, and the relation can be made precise. A choice of $\rho$ induces one of at most $2^{|\Persps|}$ defeat graphs, since all attacks sourced in one perspective toggle together after the priority filter. \textsc{Activation-Manipulation} is therefore possible credulous acceptance over a perspective-generated family of completions. The completions live in $R \cup R^{-1}$, the space that the direction-uncertain attacks of control frameworks range over. But incomplete and control frameworks read a completion as epistemic uncertainty about which framework is real; activation reads it as control over which framework is in force. The parallel extends to complexity. Possible credulous grounded acceptance in incomplete frameworks is NP-complete, already when only attacks are uncertain~\cite{Baumeister2021}, and Theorem~\ref{thm:grounded} shows that the hardness survives the compression of the completion family from $2^{|R|}$ to $2^{|\Persps|}$. Value-based argumentation~\cite{benchCapon2003} is the closest formal kin, and its subjective acceptance already asks whether some audience accepts a target. The lever differs: an audience reranks values while $\rho$ deactivates perspectives, and an audience deletes a blocked attack while we invert it. The example separates the two at the level of outcomes: activation reaches an acceptance pattern, the rejection of both $a$ and $b$, that no audience over any value assignment reaches (Observation~\ref{obs:non-vaf}). Whether every audience outcome is reachable by some activation is not settled here. Other extensions of Dung's theory modulate defeat through attacks on attacks~\cite{modgil2009} or per-node acceptance conditions~\cite{brewka2010}, but in each case the modulation is endogenous or fixed at definition time, and none treats activation as an action available to the agent.

The strategic question is not only theoretical. Multi-agent and LLM-based systems put argumentation to work in applied settings: dialectical refinement among agents for argument-component classification~\cite{baba2026}, and agent memory where goal-conditioned encodings of the same experience are reconciled by argumentation at retrieval~\cite{rashomon2026}. Such systems fix neither an attack relation nor a defeat function explicitly. But which perspectives are consulted at evaluation time is a design parameter there, and a version of activation manipulation arises once that parameter is under an agent's control.

Several questions remain open. The complexity of \textsc{Activation-Manipulation} without mandatory perspectives is unsettled for grounded semantics under inversion. Activations may carry costs, and the priority may be partly under the agent's control; both variants raise mechanism-design questions. The skeptical version, asking whether some $\rho$ accepts the target in every $\sigma$-extension, is also open. The most direct extension is the multi-agent setting, where several agents pick disjoint or overlapping subsets of $\Persps$ and the realised $\rho$ is formed by union or intersection. This yields cooperative and adversarial variants, and the existence of strategic equilibria under such dynamics is, we think, the most productive direction for further study. The framework deserves a fuller treatment. Here we record one strategic angle and a worked example that supports it.

\bibliographystyle{kr}
\bibliography{references}

\end{document}